\newtheorem{lemma}{Lemma}
\newcommand{\f}{f}
\newcommand{\g}{f^{-1}}
\renewcommand{\leq}{\leqslant}
\newcommand{\bm}[1]{\boldsymbol #1}
\newcommand{\Real}{\mbox{$\mathbb{R}$}}
\begin{document}

\title{\bf Learning from the Kernel and the Range Space}

\author{\small{Kar-Ann Toh}\\
\footnotesize{School of Electrical and Electronic Engineering} \\
\footnotesize{Yonsei University}, Seoul, Korea \\
katoh@yonsei.ac.kr}

\date{IEEE/ACIS 17th ICIS 2018, pp.417--422\\
{\scriptsize Camera-ready finalized on 22 April 2018, paper presented on 07 June 2018 in ICIS} \\
{\scriptsize(Some typos have been fixed in current version)}}

\maketitle

\begin{abstract}
In this article, a novel approach to learning a complex function which can be written as
the system of linear equations is introduced. This learning is grounded upon the
observation that solving the system of linear equations by a manipulation in the kernel
and the range space boils down to an estimation based on the least squares error
approximation. The learning approach is applied to learn a deep feedforward network with
full weight connections. The numerical experiments on network learning of synthetic and
benchmark data not only show feasibility of the proposed learning approach but also
provide insights into the mechanism of data representation.
\end{abstract}

\section{Introduction}

The learning problem in machine intelligence has been traditionally formulated as an
optimization task where an error metric is minimized. In the system of linear equations,
because it is difficult to have an exact match between the sample size and the number of
model parameters, an approximation is often sought-after according to the primal solution
space or the dual solution space in the least error sense. Such an optimization,
particularly one that is based on minimizing the least squares error, has been a popular
choice due to its simplicity and tractability in analysis and implementation. The
approach is predominant in engineering applications as evident from its pervasive
adoption in statistical and network learning.

Attributed to the computational effectiveness of the backpropagation algorithm running on
the then limited hardware (see e.g.,\cite{KelleyH1,Werbos10,Werbos33,Werbos11,Haykin1})
and the theoretical establishment of the mapping capability (see e.g.,
\cite{Funa1,Hornik1,Cybenko1,HechNiel3}), the multilayer neural networks were once a
popular tool for research and applications in the 1980s. With the advancement of
computing facilities in the 1990s-2000s, such minimization of the error cost function had
been progressed to the more memory intensive search algorithms utilizing the first- and
the second-order methods of gradient descent (see e.g.,
\cite{Battiti1,Patrick11,Barnard1}). Recently, driven by another leap bound advancement
in the computing resources together with the availability of a large quantity of data,
the multilayer neural networks reemerged as deep learning networks \cite{Ian1}. In view
of the more demanding task of processing a large quantity of data with the highly complex
network function on the limited computing resources such as the operating memory and the
level of data vectorization, the backpropagation remained a viable tool for the
optimization search.

In this work, we explore into utilization of the Kernel And the Range space (abbreviated
as KAR space) for network learning. This approach exploits the approximation property of
the kernel and the range space of the system of linear equations for learning the network
weights. The main advantage of this approach is that neither descent nor gradient
computation is needed for network learning. Moreover, once the network has been
initialized, the learning can be calculated in a single operating pass where no iterative
search is needed. The proposed approach can be applied to networks of arbitrary number of
layers. This proposal opens up a new way of solving the network and functional learning
problems without having to compute the gradient.

\section{The Kernel and the Range Space}

Consider the system of linear equations given by
\begin{equation}\label{eqn_LinearEqn}
    {\bf X}\bm{w} = {\bf y},
\end{equation}
where ${\bf X}\in\Real^{m\times d}$ is the data matrix, ${\bf y}\in\Real^{m\times 1}$ is
the target vector, and $\bm{w}\in\Real^{d\times 1}$ is the unknown parameter vector to be
solved. The \emph{range} or \emph{image} of a matrix is the span of its column vectors.
The range of the corresponding matrix transformation is called the \emph{column space} of
the matrix. The \emph{kernel} or the \emph{null space} of a linear map is the set of
solutions to the homogeneous equation ${\bf X}\bm{w} = {\bf 0}$. In other words, $\bm{w}$
is in the kernel of ${\bf X}$ if and only if $\bm{w}$ is orthogonal to each of the row
vectors of ${\bf X}$.
%The kernel of a matrix is also known as the \emph{null space} or the \emph{row} space of
%a matrix.

For an under-determined system \eqref{eqn_LinearEqn} where $m<d$, the number of equations
is less than the unknowns. This gives rise to an infinite number of solutions. However, a
least norm solution can be obtained by constraining $\bm{w}\in\Real^d$ to its $\Real^m$
subspace \cite{Madych1} by utilizing the row space of ${\bf X}$, i.e., $
    \hat{\bm{w}} = {\bf X}^T({\bf X}{\bf X}^T)^{-1}{\bf y}$. Here, ${\bf X}{\bf X}^T$
constitutes the kernel and is also known as the Gram matrix.

For an over-determined system where $m>d$, the $m$ equations in \eqref{eqn_LinearEqn} are
generally unsolvable when a strict equality is desired (see e.g., \cite{StrangG1}).
However, by multiplying ${\bf X}^T$ to both sides of \eqref{eqn_LinearEqn}, the resulted
$d$ equations
\begin{equation}\label{eqn_NormalEqn}
    {\bf X}^T{\bf X}\bm{w} = {\bf X}^T{\bf y},
\end{equation}
is called the normal equation which can be rearranged to give the least squares error
solution $\hat{\bm{w}} =({\bf X}^T{\bf X})^{-1}{\bf X}^T{\bf y}$ \cite{StrangG1}.

The above observations are summarized in the following results (see e.g.,
\cite{Albert1,Adi1,SLCampbell1,Boyd1}).

\begin{lemma} \label{lemma_LS}
Solving for $\bm{w}$ in the system of linear equations of the form \eqref{eqn_LinearEqn}
in the column space (range) of ${\bf X}$ or in the row space (kernel) of ${\bf X}$ is
equivalent to solving the least squares error approximation problem. Moreover, the
resultant solution $\hat{\bm{x}}$ is unique with a minimum-norm value in the sense that
$\|\hat{\bm{w}}\|^2_2\leq\|\bm{w}\|^2_2$ for all feasible $\bm{w}$.
\end{lemma}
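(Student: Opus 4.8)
The plan is to treat the two regimes separately and, in each case, to verify that the stated closed-form expression is (i) well-defined, (ii) a genuine solution of the relevant system (either \eqref{eqn_LinearEqn} itself in the under-determined case, or the normal equation \eqref{eqn_NormalEqn} in the over-determined case), and (iii) the minimizer of $\|\bm{w}\|_2^2$ among all feasible $\bm{w}$. First I would recall that the least squares problem $\min_{\bm{w}}\|{\bf X}\bm{w}-{\bf y}\|_2^2$ has the first-order optimality condition ${\bf X}^T({\bf X}\bm{w}-{\bf y})={\bf 0}$, which is exactly the normal equation \eqref{eqn_NormalEqn}; this identification is the bridge between ``solving in the range/row space'' and ``solving the least squares problem,'' and it needs no rank assumption because the quadratic objective is convex so a stationary point is a global minimum.

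For the over-determined case ($m>d$, ${\bf X}$ full column rank) I would argue that ${\bf X}^T{\bf X}\in\Real^{d\times d}$ is symmetric positive definite: for any $\bm{v}\neq{\bf 0}$, $\bm{v}^T{\bf X}^T{\bf X}\bm{v}=\|{\bf X}\bm{v}\|_2^2>0$ since the columns are independent. Hence $({\bf X}^T{\bf X})^{-1}$ exists, $\hat{\bm{w}}=({\bf X}^T{\bf X})^{-1}{\bf X}^T{\bf y}$ is the unique solution of \eqref{eqn_NormalEqn}, and by convexity it is the unique least squares minimizer; uniqueness makes the minimum-norm clause vacuous there (the solution set is a singleton). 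For the under-determined case ($m<d$, ${\bf X}$ full row rank) I would first check feasibility: ${\bf X}{\bf X}^T\in\Real^{m\times m}$ is positive definite by the same argument applied to rows, so ${\bf X}\hat{\bm{w}}={\bf X}{\bf X}^T({\bf X}{\bf X}^T)^{-1}{\bf y}={\bf y}$, confirming $\hat{\bm{w}}$ solves \eqref{eqn_LinearEqn}. Then, for the minimum-norm property, I would write an arbitrary feasible solution as $\bm{w}=\hat{\bm{w}}+\bm{z}$ with ${\bf X}\bm{z}={\bf 0}$, i.e. $\bm{z}\in\ker{\bf X}$, and observe that $\hat{\bm{w}}={\bf X}^T\big(({\bf X}{\bf X}^T)^{-1}{\bf y}\big)$ lies in the row space of ${\bf X}$, which is the orthogonal complement of $\ker{\bf X}$. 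Therefore $\hat{\bm{w}}^T\bm{z}=0$ and the Pythagorean identity gives $\|\bm{w}\|_2^2=\|\hat{\bm{w}}\|_2^2+\|\bm{z}\|_2^2\geq\|\hat{\bm{w}}\|_2^2$, with equality iff $\bm{z}={\bf 0}$.

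The only genuinely delicate point — and the one I would state carefully rather than gloss over — is the hypothesis under which each inverse exists. The lemma is implicitly assuming ${\bf X}$ has full rank (full column rank when $m>d$, full row rank when $m<d$); without that, ${\bf X}^T{\bf X}$ or ${\bf X}{\bf X}^T$ is singular and one must pass to the Moore--Penrose pseudoinverse ${\bf X}^{+}$, with $\hat{\bm{w}}={\bf X}^{+}{\bf y}$, to recover the minimum-norm least squares solution in full generality. I expect this to be the main ``obstacle'' only in the bookkeeping sense: the substantive content is the orthogonal-decomposition argument above, and once the rank assumptions are made explicit the three claims follow from elementary linear algebra, which is why the references \cite{Albert1,Adi1,SLCampbell1,Boyd1} are cited in lieu of a from-scratch derivation.
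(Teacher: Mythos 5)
Your proof is correct and takes essentially the approach the paper itself relies on: the paper omits the proof of Lemma~\ref{lemma_LS} (``due to the space constraint''), deferring to the cited references, and its surrounding discussion in Section~2 sketches exactly your two-case structure --- the normal equation $\hat{\bm{w}}=({\bf X}^T{\bf X})^{-1}{\bf X}^T{\bf y}$ for $m>d$ and the row-space (Gram-matrix) solution $\hat{\bm{w}}={\bf X}^T({\bf X}{\bf X}^T)^{-1}{\bf y}$ for $m<d$. Your explicit orthogonal-decomposition (Pythagorean) argument for the minimum-norm claim and your remark that the full-rank assumption must be made explicit (or replaced by the Moore--Penrose pseudoinverse ${\bf X}^{\dag}{\bf y}$, which the paper indeed uses later in \eqref{eqn_W1}--\eqref{eqn_Wn}) correctly fill in what the paper leaves implicit.
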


The proof is omitted here due to the space constraint. This result for systems with
single output (output containing a single column, ${\bf y}$) can be generalized to system
with multiple outputs (output with multiple columns, ${\bf Y} =[{\bf y}_1, \cdots, {\bf
y}_q]$) as follows.

\begin{lemma} \label{lemma_LS_matrix}
Solving for $\bm{W}$ in the system of linear equations of the form
\begin{equation}\label{eqn_LinearEqn_matrix}
    {\bf X}\bm{W} = {\bf Y}, \ \ \ {\bf X}\in\Real^{m\times d},\ \bm{W}\in\Real^{d\times q},
    \ {\bf Y}\in\Real^{m\times q}
\end{equation}
in the column space (range) of ${\bf X}$ or in the row space (kernel) of ${\bf X}$ is
equivalent to minimizing the sum of squared errors given by
\begin{equation}\label{eqn_squared_error_distance_trace}
    \textup{SSE} = trace\left( ({\bf X}\bm{W}-{\bf Y})^T({\bf X}\bm{W}-{\bf Y}) \right).
\end{equation}
Moreover, the resultant solution $\hat{\bm{W}}$ is unique with a minimum-norm value in
the sense that $\|\hat{\bm{W}}\|^2_2\leq\|\bm{W}\|^2_2$ for all feasible $\bm{W}$.
\end{lemma}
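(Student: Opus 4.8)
The plan is to reduce the matrix statement to the vector statement of Lemma~\ref{lemma_LS} by exploiting the fact that both the objective \eqref{eqn_squared_error_distance_trace} and the linear constraint \eqref{eqn_LinearEqn_matrix} decouple across the columns of $\bm{W}$. Writing $\bm{W} = [\bm{w}_1, \cdots, \bm{w}_q]$ and ${\bf Y} = [{\bf y}_1, \cdots, {\bf y}_q]$, the $j$-th column of ${\bf X}\bm{W}$ is ${\bf X}\bm{w}_j$, so ${\bf X}\bm{W} = {\bf Y}$ holds if and only if ${\bf X}\bm{w}_j = {\bf y}_j$ for every $j = 1, \ldots, q$. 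Likewise, the $j$-th diagonal entry of $({\bf X}\bm{W}-{\bf Y})^T({\bf X}\bm{W}-{\bf Y})$ equals $\|{\bf X}\bm{w}_j - {\bf y}_j\|_2^2$, so $\textup{SSE} = \sum_{j=1}^{q}\|{\bf X}\bm{w}_j - {\bf y}_j\|_2^2$.

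First I would apply Lemma~\ref{lemma_LS} to each summand separately: solving ${\bf X}\bm{w}_j = {\bf y}_j$ in the range or the kernel of ${\bf X}$ is equivalent to minimizing $\|{\bf X}\bm{w}_j - {\bf y}_j\|_2^2$, with the minimizer $\hat{\bm{w}}_j$ being the unique minimum-norm one. Since the terms of the sum involve disjoint sets of variables, minimizing the sum is the same as minimizing each term, so the matrix $\hat{\bm{W}} = [\hat{\bm{w}}_1, \cdots, \hat{\bm{w}}_q]$ obtained by stacking the columnwise solutions is exactly the minimizer of \eqref{eqn_squared_error_distance_trace}. Concretely this gives $\hat{\bm{W}} = {\bf X}^T({\bf X}{\bf X}^T)^{-1}{\bf Y}$ in the under-determined case and $\hat{\bm{W}} = ({\bf X}^T{\bf X})^{-1}{\bf X}^T{\bf Y}$ in the over-determined case, i.e., $\hat{\bm{W}}$ is the result of applying the (pseudo)inverse of ${\bf X}$ to ${\bf Y}$, acting column by column as asserted.

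For the minimum-norm claim, I would use $\|\bm{W}\|_2^2 = \sum_{j=1}^{q}\|\bm{w}_j\|_2^2$, reading $\|\cdot\|_2$ on a matrix as the Frobenius norm, which is the natural companion of the trace form in \eqref{eqn_squared_error_distance_trace}. For any feasible $\bm{W}$, each column $\bm{w}_j$ is feasible for the $j$-th vector problem, hence $\|\hat{\bm{w}}_j\|_2^2 \leq \|\bm{w}_j\|_2^2$ by Lemma~\ref{lemma_LS}; summing over $j$ gives $\|\hat{\bm{W}}\|_2^2 \leq \|\bm{W}\|_2^2$, and uniqueness of $\hat{\bm{W}}$ is inherited from the columnwise uniqueness.

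The only point needing a little care — the ``hard part'', such as it is — is the bookkeeping that relates the trace and Frobenius quantities to their per-column Euclidean counterparts, together with the observation that the columnwise-optimal solutions genuinely assemble into the globally optimal matrix (which holds precisely because objective and constraint separate over columns). Once this separation of variables is made explicit, the lemma is an immediate corollary of Lemma~\ref{lemma_LS} and introduces no new analytic content.
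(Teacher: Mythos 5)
Your proposal is correct and follows essentially the same route as the paper's proof: decompose ${\bf X}\bm{W}={\bf Y}$ into the $q$ single-output systems ${\bf X}\bm{w}_j={\bf y}_j$, note that the trace objective is the sum of the per-column squared error lengths so the columnwise least-squares/minimum-norm solutions of Lemma~\ref{lemma_LS} assemble into $\hat{\bm{W}}=({\bf X}^T{\bf X})^{-1}{\bf X}^T{\bf Y}$ or $\hat{\bm{W}}={\bf X}^T({\bf X}{\bf X}^T)^{-1}{\bf Y}$, and sum the per-column norm bounds for the minimum-norm claim. If anything, your handling of the norm statement (explicitly reading $\|\cdot\|_2$ as the Frobenius norm and summing columnwise inequalities) is a bit more explicit than the paper's brief remark that the individually minimized norms sum to a minimum.
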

\begin{proof}
Equation \eqref{eqn_LinearEqn_matrix} can be re-written as a set of multiple linear
systems of \eqref{eqn_LinearEqn} as
\begin{equation}\label{eqn_LinearEqn_matrix_decomposed}
    {\bf X}[\bm{w}_1,\cdots,\bm{w}_q] = [{\bf y}_1,\cdots,{\bf y}_q].
\end{equation}
Since the trace of $({\bf X}\bm{W}-{\bf Y})^T({\bf X}\bm{W}-{\bf Y})$ is equal to the sum
of the squared lengths of the error vectors ${\bf X}\bm{w}_i-{\bf y}_i$, $i=1,2,...,q$,
the unique solution $\hat{\bm{W}}=({\bf X}^T{\bf X})^{-1}{\bf X}^T{\bf Y}$ in the column
space of ${\bf X}$ or that $\hat{\bm{W}}={\bf X}^T({\bf X}{\bf X}^T)^{-1}{\bf Y}$ in the
row space of ${\bf X}$, not only minimizes this sum, but also minimizes each term in the
sum \cite{Duda1}. Moreover, since the column and the row spaces are independent, the sum
of the individually minimized norms is also minimum.
\end{proof}

The process of solving the algebraic equations under the \emph{kernel-and-range} (KAR)
spaces with implicit least squares error seeking shall be exploited to solve the network
learning problem in the following section.

\section{Network Learning}

Consider an $n$-layer network (of $h_1\cdots h_n$ structure where $h_n=q$ is the output
dimension) given by
\begin{equation}\label{eqn_Nlayer_net}\small
    {\bf Y} = \f_n\left([{\bf 1},\f_{n-1}(\cdots\left[{\bf 1},\f_2([{\bf 1},\f_1({\bf
    X}\bm{W}_1)]\bm{W}_2)\right]\cdots\bm{W}_{n-1})]\bm{W}_n \right),
\end{equation}
where ${\bf X}\in\Real^{m\times (d+1)}$, $\bm{W}_1\in\Real^{(d+1)\times h_1}$,
$\bm{W}_2\in\Real^{(h_1+1)\times h_2}$, $\cdots$,
$\bm{W}_{n-1}\in\Real^{(h_{n-2}+1)\times h_{n-1}}$,
$\bm{W}_{n}\in\Real^{(h_{n-1}+1)\times q}$, ${\bf 1}=[1,...,1]^T\in\Real^{m\times 1}$,
and ${\bf Y}\in\Real^{m\times q}$.
We shall partition the term $\bm{W}_n$ into $\left[\begin{array}{c} {\bf w}_n^T \\ \textsf{W}_n \\
\end{array}\right]$ where ${\bf w}_n\in\Real^{q\times 1}$ and $\textsf{W}_n\in\Real^{h_{n-1}\times q}$.
Assume that $\g_n$ exists, we can take the inverse of $\f_n$ to both sides of
\eqref{eqn_Nlayer_net}. By separately considering the bias term and moving it to the left
hand side, we can post-multiply both sides of the equation by $\textsf{W}_n^T$ to get
{\small\begin{eqnarray} \lefteqn{
  \left[\g_n({\bf Y}) - {\bf 1}\cdot{\bf w}_n^T\right]\textsf{W}_n^T }\nonumber \\
    && = \f_{n-1}(\cdots\left[{\bf 1},\f_2([{\bf 1},\f_1({\bf
    X}\bm{W}_1)]\bm{W}_2)\right]\cdots\bm{W}_{n-1})\cdot\textsf{W}_n\textsf{W}_n^T
  \nonumber\\
\end{eqnarray} }
Next, by taking advantage of the column-row space manipulation, we arrive at
{\small\begin{eqnarray} \lefteqn{\hspace{-5mm}
  \lim_{\lambda\rightarrow 0}\left[\g_n({\bf Y}) - {\bf 1}\cdot{\bf w}_n^T\right]\textsf{W}_n^T \left(\lambda{\bf
     I}+\textsf{W}_n\textsf{W}_n^T\right)^{-1} } \nonumber\\ && =
      \f_{n-1}(\cdots\left[{\bf 1},\f_2([{\bf 1},\f_1({\bf
    X}\bm{W}_1)]\bm{W}_2)\right]\cdots\cdot\bm{W}_{n-1}) \nonumber\\
\lefteqn{\hspace{-5mm}
  \left[\g_n({\bf Y}) - {\bf 1}\cdot{\bf w}_n^T\right]\textsf{W}_n^T \left(\textsf{W}_n\textsf{W}_n^T\right)^{-1} } \nonumber\\ && =
      \f_{n-1}(\cdots\left[{\bf 1},\f_2([{\bf 1},\f_1({\bf
    X}\bm{W}_1)]\bm{W}_2)\right]\cdots\bm{W}_{n-1}) \nonumber\\
\lefteqn{\hspace{-5mm}
  \left[\g_n({\bf Y}) - {\bf 1}\cdot{\bf w}_n^T\right]\textsf{W}_n^{\dag} } \nonumber\\ && =
      \f_{n-1}(\cdots\left[{\bf 1},\f_2([{\bf 1},\f_1({\bf
    X}\bm{W}_1)]\bm{W}_2)\right]\cdots\bm{W}_{n-1}),
  \label{eqn_omega_expr}
\end{eqnarray} }
\hspace{-1.5mm}where $\dag$ denotes the pseudo-inverse operation. This pseudo-inverse can
be in the form of left or right operation depending on the matrix rank condition.

This process of inversion and moving the bias term plus kernel-and-range space
manipulation is continued until reaching the first-layer where its hidden weights can be
written as:
\begin{eqnarray}
  && \hspace{-8mm}\bm{W}_1 = {\bf X}^{\dag}\g_1\left(
        \left[\g_2(\cdots
        \left[\g_{n-1}(
        \left[\g_n({\bf Y}) - {\bf 1}\cdot{\bf w}_n^T\right]
        \textsf{W}_n^{\dag}) \right.\right.\right. \nonumber\\ &&
        \hspace{16mm}\left.\left.\left.
        - {\bf 1}\cdot{\bf w}_{n-1}^T\right]
        \textsf{W}_{n-1}^{\dag}\cdots) - {\bf 1}\cdot{\bf w}_2^T\right]
        \textsf{W}_2^{\dag} \right),    \label{eqn_W1}
\end{eqnarray}
After having $\bm{W}_1$ derived, it can be back-substituted to obtain $\bm{W}_2$ as
\begin{eqnarray}
  && \hspace{-12mm}\bm{W}_2 = \left[{\bf 1},\f_1({\bf X}\bm{W}_1)\right]^{\dag}\g_{2}\left(
        [\g_{3}(\cdots
        [\g_{n-1}(
        [\g_n({\bf Y}) \right. \nonumber\\ &&
        \hspace{-12mm} \left.
        - {\bf 1}\cdot{\bf w}_n^T ]
        \textsf{W}_n^{\dag}) - {\bf 1}\cdot{\bf w}_{n-1}^T ]
        \textsf{W}_{n-1}^{\dag}\cdots) - {\bf 1}\cdot{\bf w}_3^T]
        \textsf{W}_3^{\dag} \right)\hspace{-1mm},   \label{eqn_W2}
\end{eqnarray}
The process is iterated until the weights of the $n$th-layer is obtained:
\begin{equation}\label{eqn_Wn}\small
   \bm{W}_n
        = \left[{\bf 1},f_{n-1}(\cdots f_2([{\bf 1},\f_1({\bf X}\bm{W}_1)]\bm{W}_2)
        \cdots\bm{W}_{n-1})\right]^{\dag} \g_n({\bf Y}).
\end{equation}

Based on the above derivation, the full weights for each layer (i.e., $\bm{W}_k$,
$k=1,\cdots,n$) can be obtained in an analytic form when the weights without considering
the bias component (i.e., $\textsf{W}_k$, $k=2,\cdots,n$) is known. Here, we propose to
use a random initialization of $\textsf{W}_k$, $k=2,\cdots,n$ for solving $\bm{W}_k$,
$k=1,\cdots,n$ in network learning. We shall call this learning network \texttt{KARnet}
for convenience.

\section{Synthetic Data} \label{sec_synthetic}

In this section, we observe the behavior of the proposed network learning on three
synthetic data sets with known properties. The first set of data represents the
regression problem whereas the second and third data sets are well-known benchmarks for
classification. For all the three problems and including the following experiments, our
choice of the activation function $f$ and its inverse for each layer are respectively the
modified \texttt{softplus} function (i.e., $f(x)=\log(0.8+e^{x})$) and its inverse given
by $\g(x)=\log(e^{x}-0.8)$.

\subsection{Single Dimensional Regression Problem}

The first set of synthetic data has been generated using $y = \sin(2x)/(2x)$ based on
$x\in\{1,2,3,4,5,6,7,8\}$ for training. To simulate noisy outputs, a 20\% of variation
from the original $y$ values has been incorporated where 10 trials of the noisy
measurements are included for training as well. A two-layer network is adopted to learn
the data. Fig.~\ref{fig_1D_2layer}(a) shows the learning results for all the eight
training data points when a two-layer network uses six hidden nodes (i.e., a 6-1
structure). This is an over-determined system since there are more data samples than the
\emph{effective} number of parameters. Fig.~\ref{fig_1D_2layer}(b) shows the results when
eight hidden nodes are used for a two-layer network (i.e., a 8-1 structure). This is an
under-determined system as there are less data samples than the number of
\emph{effective} parameters (total $8+1$ including the bias's weight). Here, we note that
for the two-layer network, the system size is determined by the dimension of $\left[{\bf
1},\f_1({\bf X}\bm{W}_1)\right]$ and its rank. \footnote{According to
\eqref{eqn_W1}-\eqref{eqn_Wn}, a Moore-Penrose inverse operation is taken over the
matrices (i.e., ${\bf X}$, $\left[{\bf 1},\f_1({\bf X}{\bf W}_1)\right]$, $\cdots$,
$\left[{\bf 1},f_{n-1}(\cdots f_2([{\bf 1},\f_1({\bf X}{\bf W}_1)]{\bf W}_2) \cdots{\bf
W}_{n-1})\right]$) to relate among the weight solutions. For the 2-layer network, the
matrix $\left[{\bf 1},\f_1({\bf X}{\bf W}_1)\right]$ which is nearest to the output is of
size ${m\times (h_1+1)}$. Here, the minimum size of the hidden nodes required for this
matrix to be invertible is $(h_1+1)=m$. This in turn gives rise to an output weight of
size $(h_1+1)\times q=m\times 1$ for $q=1$. Hence, the \emph{effective} number of
parameters (or adjustable parameters) needed for data representation is hinged upon the
number of output weights which corresponds to the sample size and the output dimension
(i.e., $m\times q$).}

%\footnote{According to \eqref{eqn_W1}-\eqref{eqn_Wn}, a Moore-Penrose inverse operation
%is taken over these matrices to relate among the weight solutions. While the sample size
%($m$) determines the height of the matrix, the width of the matrix is given by the number
%of hidden nodes of the layer of interest (i.e., $h_n+1$ including the bias node). Thanks
%to the nonlinear activation which can turn singularity of outer product to full rank, the
%minimum number of adjustable parameters to represent any set of samples is determined by
%the mere product of the number of samples with the number of outputs, i.e., $mq$, whereas
%the \emph{effective} number of parameters is hinged upon the number of output weights
%which correspond to the sample size.}

Next, a five-layer network is used to learn the same set of 1D data.
Fig.~\ref{fig_1D_5layer}(a) shows the results for the network of 1-1-1-6-1 structure.
This constitutes an over-determined case where there are less \emph{effective} parameters
than data samples). Fig.~\ref{fig_1D_5layer}(b) shows the results for the network of
1-1-1-8-1 structure which is under-determined. Here, the system structure of interest is
given by \[
\left[{\bf 1},f_{4}([\cdots f_2([{\bf 1},\f_1({\bf
X}\bm{W}_1)]\bm{W}_2)\cdots)]\bm{W}_{4}\right].\]

In summary, the network is seen to find its fit through all data points including those
noisy ones for the under-determined case in both networks. However, for the
over-determined case, the network does not fit every data points due to the insufficient
number of parameters for modelling all data points. This example clearly explains the
fitting behavior of multilayer network learning.

\begin{figure}[hhh]
  \begin{center}
  \epsfxsize=10.8cm
  \epsfysize=6.8cm
  \epsffile[112    14  1306   671]{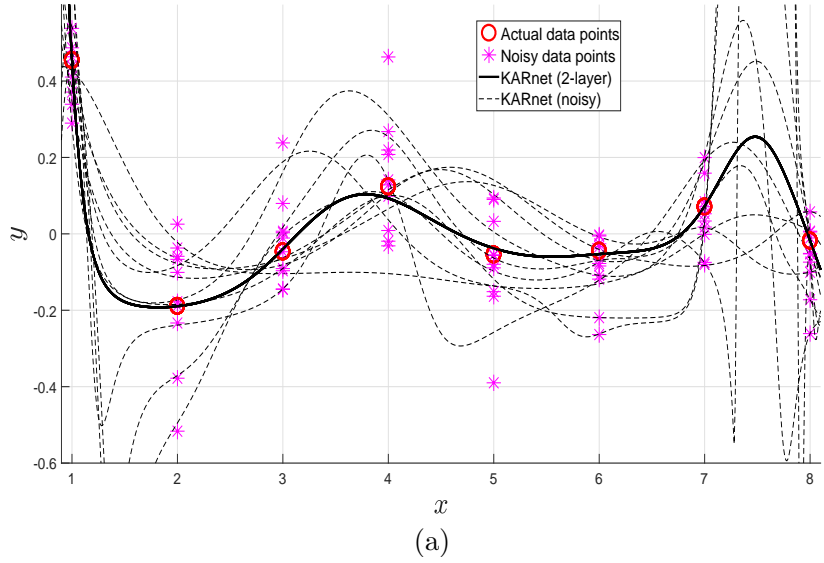}
  \\ \hspace{3mm} (a) \\*[5mm]
  \epsfxsize=10.8cm
  \epsfysize=6.8cm
  \epsffile[112    14  1306   671]{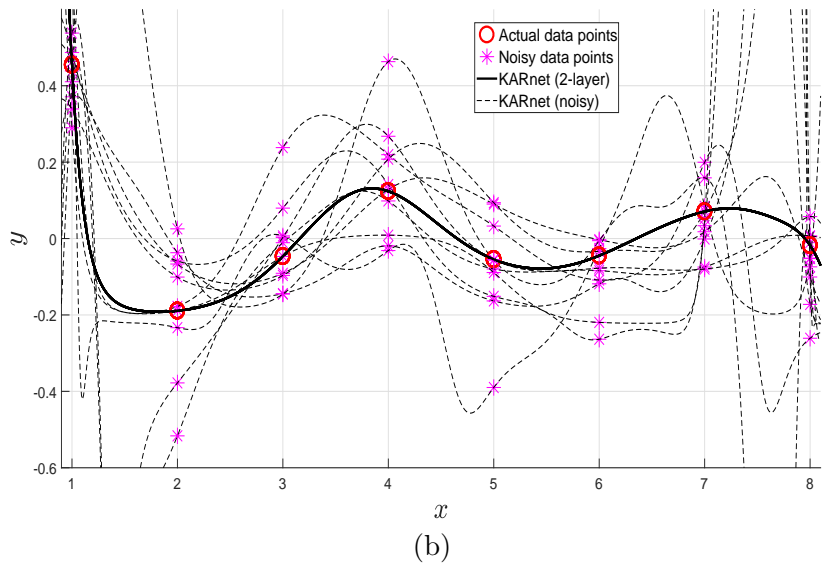}
  \\ \hspace{3mm} (b) \\
  \caption{Decision outputs of a two-layer feedforward network
  (\texttt{KARnet} with $\f=softplus$) trained by the
  proposed method. The top and bottom panels demonstrate respectively an under-fitting
  case and an over-fitting case.}
  \label{fig_1D_2layer}
  \end{center}
\end{figure}

\begin{figure}[tttt]
  \begin{center}
  \epsfxsize=10.8cm
  \epsfysize=6.8cm
  \epsffile[112    14  1306   671]{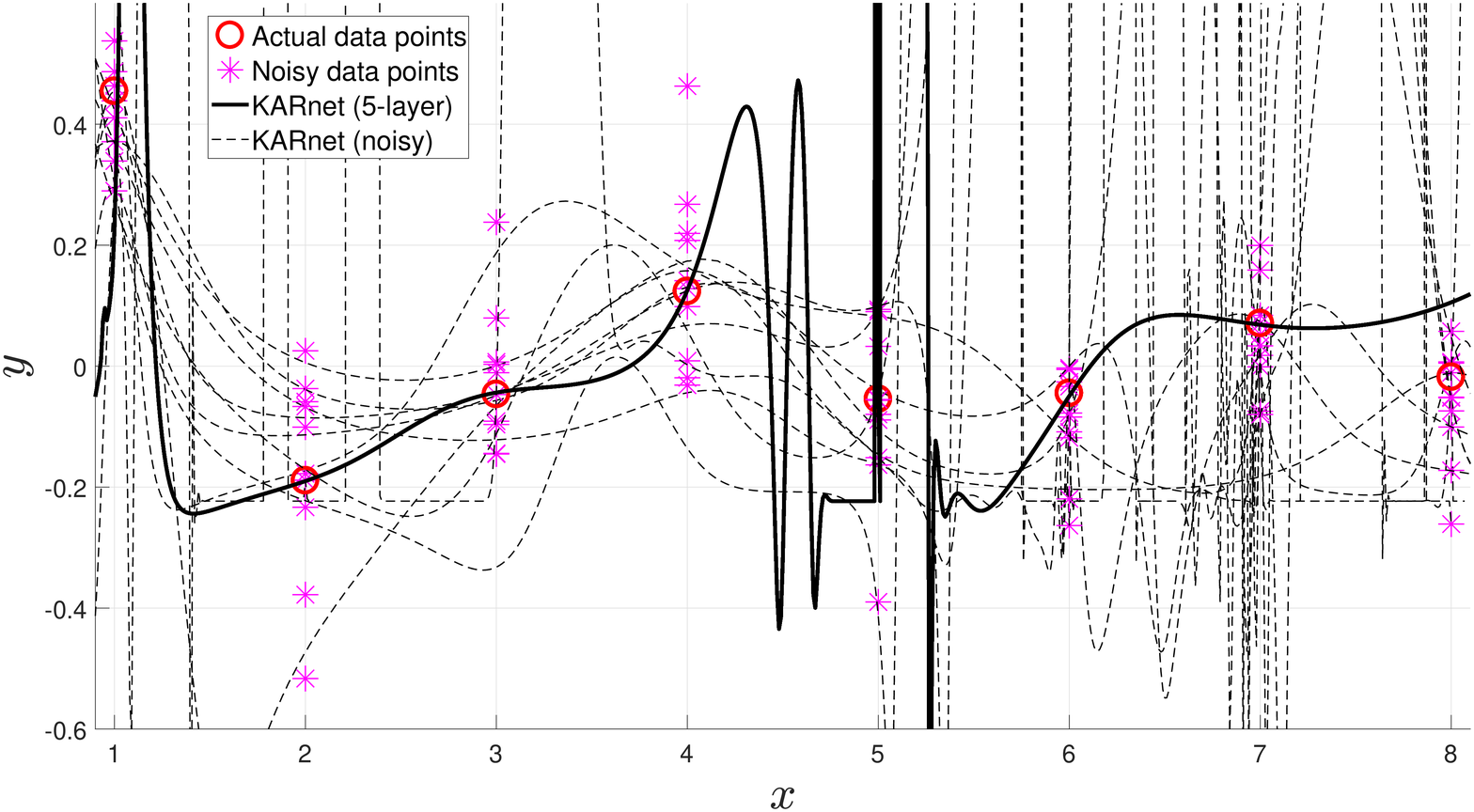}
  \\ \hspace{3mm} (a) \\*[5mm]
  \epsfxsize=10.8cm
  \epsfysize=6.8cm
  \epsffile[112    14  1306   671]{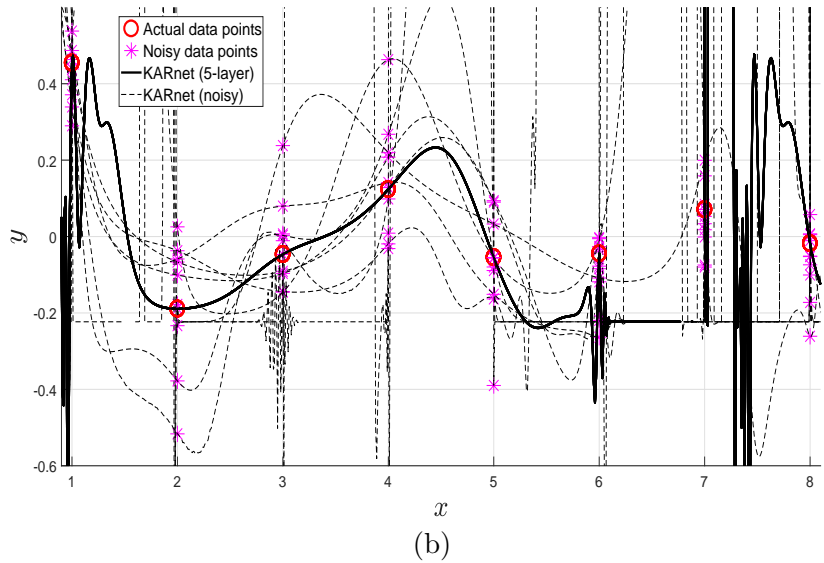}
  \\ \hspace{3mm} (b) \\
  \caption{Decision outputs of a five-layer feedforward network
  (\texttt{KARnet} with $\f=softplus$) trained by the
  proposed method. The top and bottom panels demonstrate respectively an under-fitting
  case and an over-fitting case.}
  \label{fig_1D_5layer}
  \end{center}
\end{figure}

\clearpage

\subsection{The XOR Problem}

The next example is the well-known XOR problem which consists of four data points with
one of the data points being perturbed by a small value to facilitate numerical stability
in learning (i.e., the input data points are $\{(0,0),\ (1,1),\ (1,0),\ (0.001,1.001)\}$
which are associated with labels $\{0,0,1,1\}$ respectively). A two-layer \texttt{KARnet}
with two hidden nodes is adopted to learn the data. For comparison, the
\texttt{feedforwardnet} of the Matlab toolbox is adopted with a similar architecture
(adopting a two-layer structure with \texttt{softplus} activation) for learning the same
set of data using the default training method \texttt{trainlm}.
Fig.~\ref{fig_XOR_2n2layer}(a) and (b) show respectively the learned decision surfaces
for \texttt{KARnet} and \texttt{feedforwardnet}. These results show the capability of
\texttt{KARnet} to fit the nonlinear surface and the premature stopping of
\texttt{feedforwardnet} for this data set.

Next, we compare the two networks using a five-layer architecture with each layer having
two hidden nodes for both networks. Fig.~\ref{fig_XOR_2n10layer}(a) and (b) show
respectively the decision surfaces for \texttt{KARnet} and \texttt{feedforwardnet}. These
results show the fitting capability of \texttt{KARnet} despite the larger number of
adjustable parameters and again, the premature stopping of \texttt{feedforwardnet} for
this data set.

\begin{figure}[hhh]
  \begin{center}
\begin{tabular}{c}
  \epsfxsize=9cm
  \epsffile[29     4   383   292]{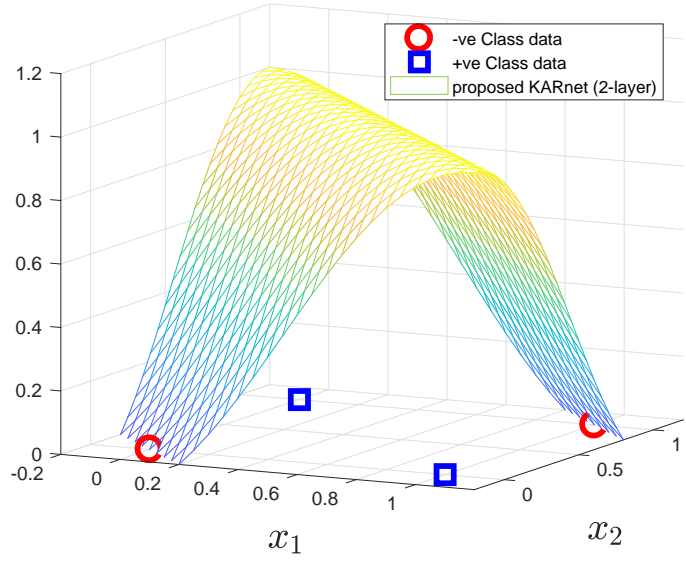}
  \\ \hspace{3mm} (a) \\*[5mm]
  \epsfxsize=9cm
  \hspace{0cm}
  \epsffile[29     4   381   293]{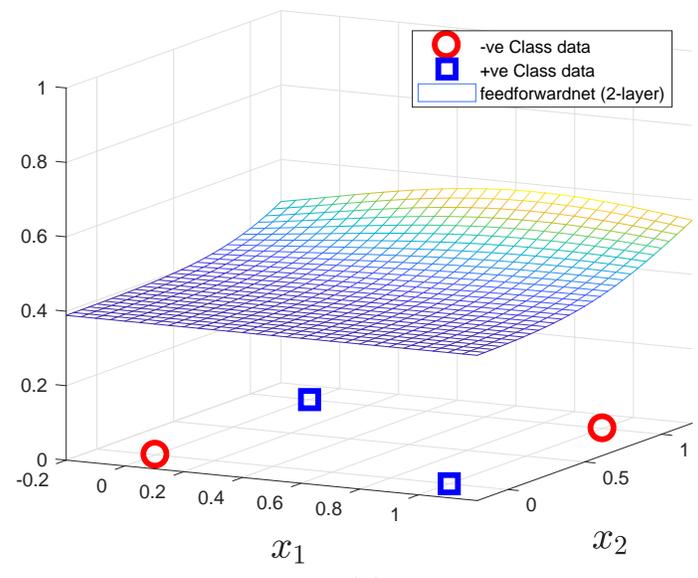}
  \\ \hspace{3mm} (b)
\end{tabular}
  \caption{Decision surfaces of two-layer feedforward networks
  (the proposed \texttt{KARnet} with $\f=softplus$, and \texttt{feedforwardnet} with $\f=softplus$ trained by \texttt{trainlm}).}
  \label{fig_XOR_2n2layer}
  \end{center}
\end{figure}

\begin{figure}[hhh]
  \begin{center}
\begin{tabular}{c}
  \epsfxsize=9cm
  \epsffile[28     4   381   292]{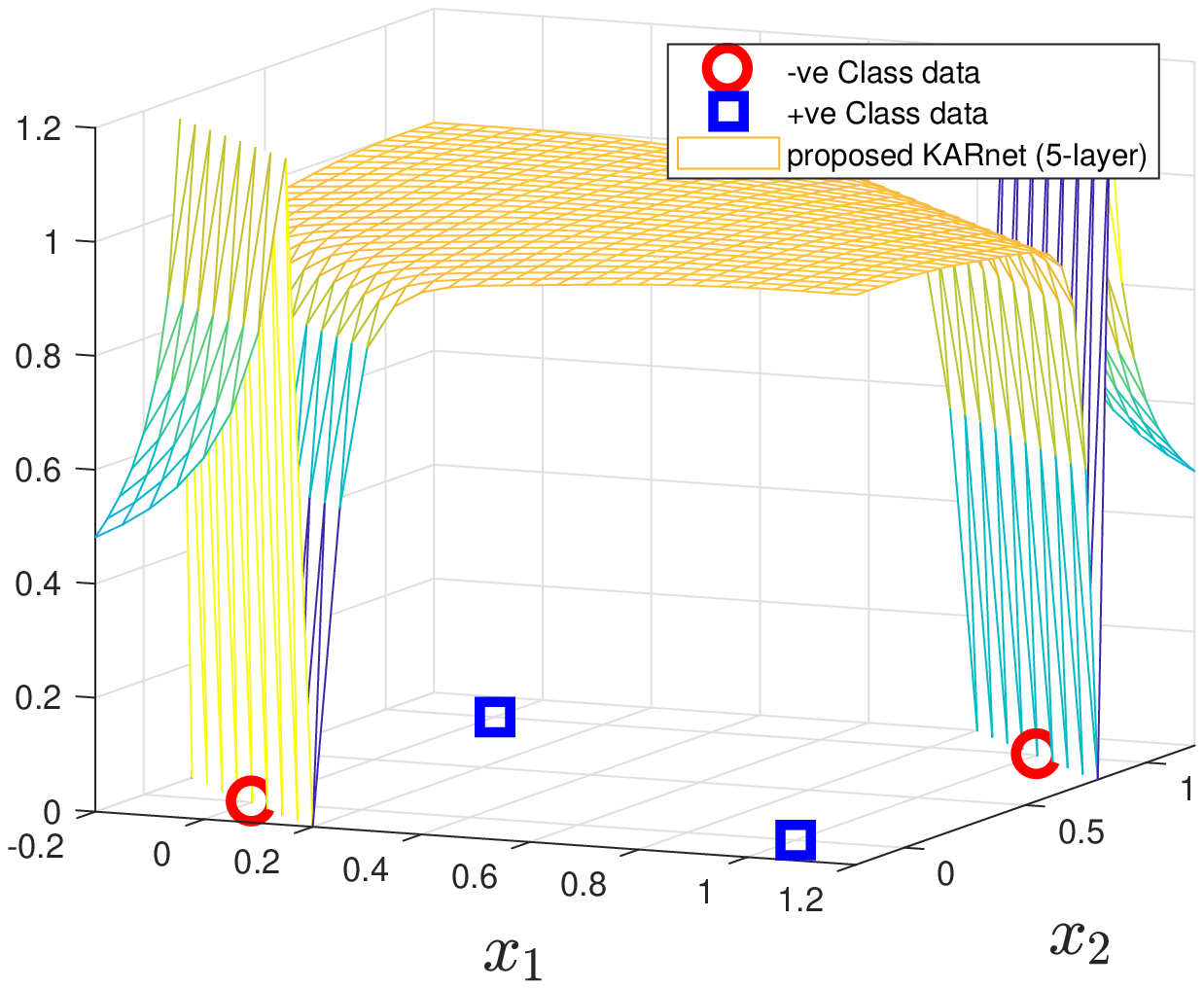}
  \\ \hspace{3mm} (a) \\*[5mm]
  \epsfxsize=9cm
  \hspace{0cm}
  \epsffile[28     4   381   292]{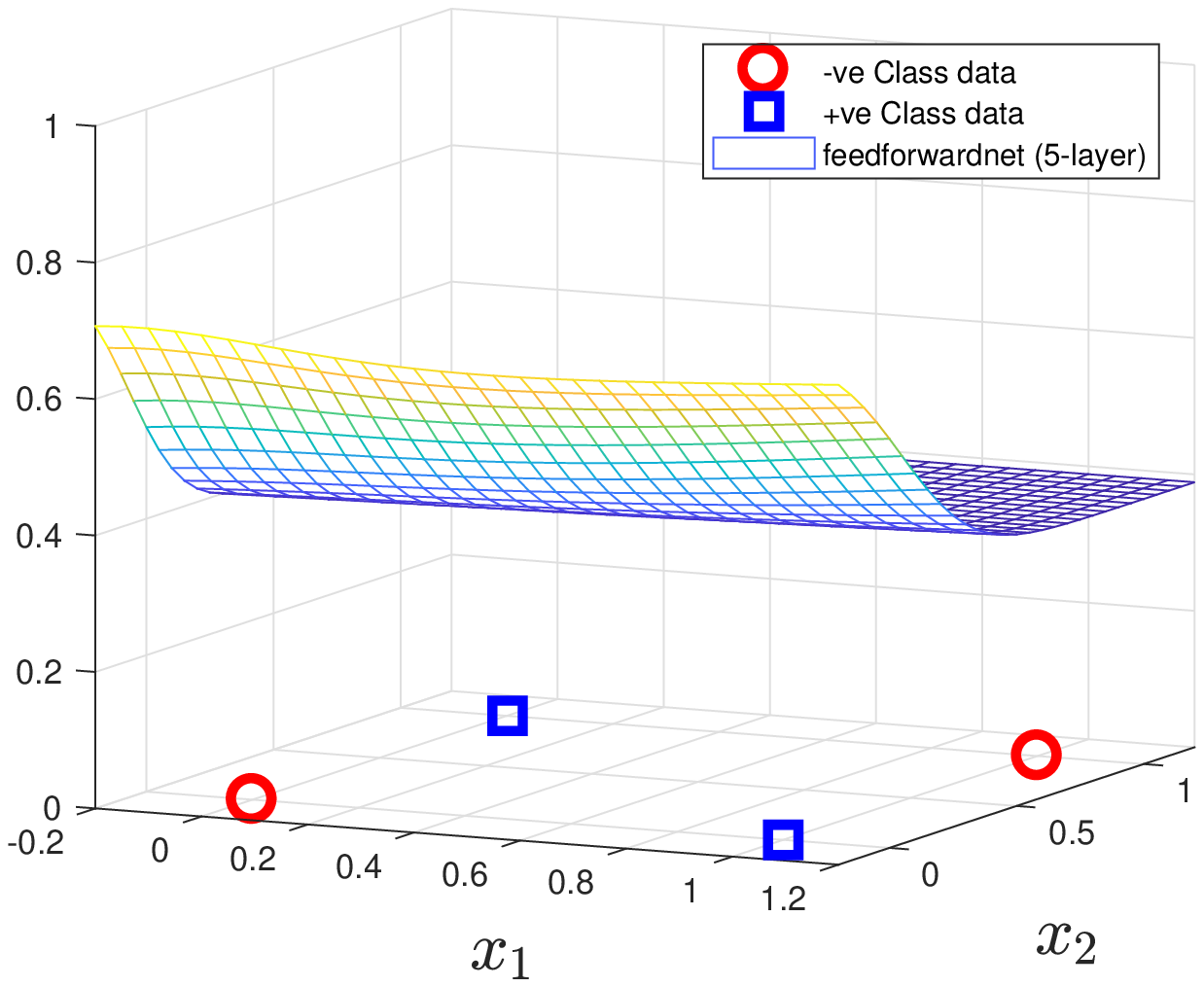}
  \\ \hspace{3mm} (b)
\end{tabular}
  \caption{Decision surfaces of five-layer feedforward networks
  (the proposed \texttt{KARnet} with $\f=softplus$, and \texttt{feedforwardnet} with $\f=softplus$ trained by \texttt{trainlm}).}
  \label{fig_XOR_2n10layer}
  \end{center}
\end{figure}

\subsection{The Three-Spiral Problem}

In this example, a total of 1500 randomly perturbed data points which form a 3-spiral
distribution have been used as the training data. Among these data, each of the spiral
arm consists of 500 data points (which are shown as red, green and blue circles in
Fig.~\ref{fig_Spiral_3layer}). A two-layer \texttt{KARnet} with 100 hidden nodes has been
adopted for learning these data points with respective labels using an indicator matrix.
Since there are three classes, the number of output nodes is 3. The learned decision
regions (which are shown in light red, green and blue tones) as shown in
Fig.~\ref{fig_Spiral_3layer} show the mapping capability of \texttt{KARnet} for the
three-category problem.

\begin{figure}[hhh]
  \begin{center}
  \epsfxsize=11.8cm
  \epsffile[36    19   385   295]{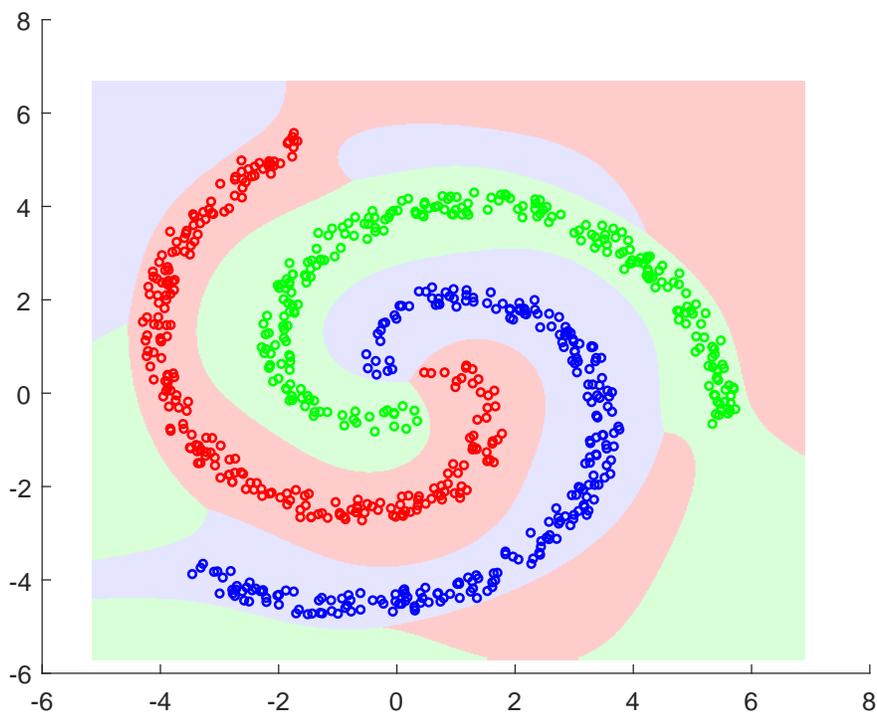}
  \caption{Decision regions of a three-layer feedforward network
  (proposed \texttt{KARnet} with $\f=softplus$) trained by the
  proposed method.}
  \label{fig_Spiral_3layer}
  \end{center}
\end{figure}

\clearpage

\section{Experiments on Real-World Data}

\subsection{Nursery Data Set}

The goal in this database \cite{Olave1,UCI1b} was to rank applications for nursery
schools based upon attributes such as occupation of parents and child's nursery, family
structure and financial standing, as well as the social and health picture of the family.
The eight input features for the 12960 instances are namely, `parents' with attributes
usual, pretentious, great\_pret; `has\_nurs' with attributes proper, less\_proper,
improper, critical, very\_crit; `form' with attributes complete, completed, incomplete,
foster; `children' with attributes  1, 2, 3, more; `housing' with attributes convenient,
less\_conv, critical; `finance' with attributes convenient, inconv; `social' with
attributes non-prob, slightly\_prob, problematic; `health' with attributes recommended,
priority, and not\_recom. These input attributes are converted into discrete numbers and
normalized to the range (0,1]. The output decisions include `not\_recom' with 4320
instances, `recommend' with 2 instances, `very\_recom' with 328 instances, `priority'
with 4266 instances and `spec\_prior' with 4044 instances. Since the category `recommend'
has not enough instances for partitioning in 10-fold cross-validation, it is merged into
the `very\_recom' category. We thus have 4 decision categories for classification.

For \texttt{KARnet}'s hidden parameter tuning, an inner 10-fold cross-validation loop
using only the training set was adopted to determine the hidden node size among
$h\in\{$1, 2, 3, 5, 10, 20, 30, 50, 80, 100, 200, 500$\}$. For 3-layer and 4-layer
networks, the network structures of $2h$-$h$-$q$ and $4h$-$2h$-$h$-$q$ are adopted
respectively ($q$ is the output dimension). The chosen hidden node size is then applied
for 10 runs of test evaluation using the outer cross-validation loop. The results for
2-layer, 3-layer, and 4-layer networks are respectively 92.39\% at $h=100$, 92.64\% at
$h=80$, and 92.73\% at $h=200$. These results are comparable with 98.89\% for the
\texttt{feedforwardnet} ($h=100$, 2-layer) and 91.69\% for the TERRM method \cite{Toh70}.

\subsection{Letter Recognition}

The data set comes with 20,000 samples, each with 16 feature attributes. The goal is to
recognize the 26 capital letters in the English alphabet based on a large number of
black-and-white rectangular pixel displays. The character images consist of 20 different
fonts where each letter within these 20 fonts was randomly distorted to produce a large
pool of unique stimuli \cite{frey1991letter,UCI1b}. Each stimulus was converted into 16
primitive numerical attributes such as the statistical moments and the edge counts. These
attributes were then scaled to fit into a range of integer values from 0 to 15.

Similar to the above evaluation setting, 10 trials of 10-fold stratified cross-validation
have been performed for classifying the 26 categories. The results for 2-layer, 3-layer,
and 4-layer \texttt{KARnet}s are respectively 88.99\%, 94.32\%, and 94.12\%, all at
$h=500$. The \texttt{feedforwardnet} (2-layer) encounters ``out of memory'' for the
current computing platform (Intel i7-6500U CPU at 2.59GHz with 8G of RAM).

\subsection{Optical Recognition of Handwritten Digits}

This data set was collected based on a total of 43 people, wherein 30 contributed to the
training set and different 13 to the test set \cite{kaynak1995methods,UCI1b}. The
original 32$\times$32 bitmaps were divided into non-overlapping blocks of 4$\times$4
where the number of on pixels were counted within each block. This generated an input
matrix of 8$\times$8 where each element was an integer in the range [0, 16]. The
dimensionality (64) is thus reduced (from 32$\times$32) and the resulted image is
invariant to minor distortions. The total number of samples collected for training and
testing are respectively 3823 and 1797. In our experiment, these two sets (training and
test sets) of data are combined for the running of 10 trials of 10-fold cross-validation
tests. Fig.~\ref{fig_Optdigit} shows some samples of the image data taken from the
training set and the testing set, respectively.

\begin{figure}[hhh]
  \begin{center}
  \epsfxsize=11.8cm
  \epsffile[58    39   377   292]{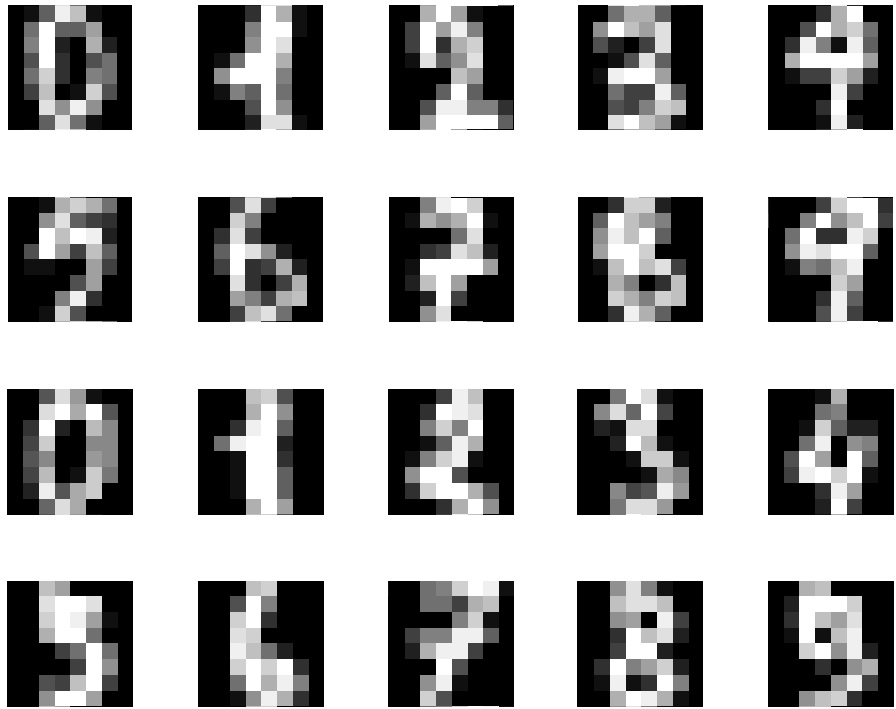}
  \caption{Handwritten digits: samples in the upper two panels are taken from the training set
  and samples in the bottom two panels are taken from the test set.}
  \label{fig_Optdigit}
  \end{center}
\end{figure}

The results for 2-layer, 3-layer, and 4-layer networks are respectively 97.25\% at
$h=500$, 97.17\% at $h=200$, and 96.96\% at $h=100$. These results are comparable with
the 96.81\% for the TERRM method \cite{Toh70}. The \texttt{feedforwardnet} (2-layer)
encounters ``out of memory'' for the current computing platform.

\subsection{Comparison with State-of-the-arts}

The results of \texttt{KARnet} are compared with several state-of-the-art methods namely,
the Reduced Multivariate Polynomial method (RM, \cite{Toh39}), the Total Error Rate
method adopting RM (TERRM, \cite{Toh70}), the Support Vector Machines adopting Polynomial
(SVM-Poly, \cite{LIBSVM}) and Radial Basis Function (SVM-Rbf, \cite{LIBSVM}) kernels, and
the \texttt{feedforwardnet} (2-layer) from the Matlab toolbox \cite{Matlab}, all running
under the similar 10 trials of 10-fold cross-validation protocol.
Table~\ref{table_comparison} shows that the proposed \texttt{KARnet} has comparable
prediction accuracy with state-of-the-art methods. While the SVMs had been tuned by
adjusting the kernel parameters (such as the order in the polynomial kernel, and the
Gaussian width in the radial basis kernel), the proposed network had been tuned by
adjusting the number of hidden nodes ($h$) in each layer according to the structures
$2h$-$h$-$q$ and $4h$-$2h$-$h$-$q$.

\begin{table}[hhh]{\normalsize
\begin{center}
\caption{Comparison of accuracy (\%) with state-of-the-arts} \label{table_comparison}
\begin{tabular}{|l|c|c|c|} \hline
  Methods                               &  Nursery  & Letter & Optdigit \\ \hline
  RM \cite{Toh39}                       & 90.93 & 74.14 & 95.32 \\
  TERRP \cite{Toh77}                    & 96.46 & 88.20 & 98.16 \\
  TERRM \cite{Toh70}                    & 91.69 & 78.42 & 96.81 \\
  SVM-Poly \cite{LIBSVM}                & 91.61 & 77.22 & 95.52 \\
  SVM-Rbf \cite{LIBSVM}                 & 98.24 & 97.14 & 99.13 \\
  \texttt{FFnet}(2-layer) \cite{Matlab} & 98.89 & OM    & OM \\
  \texttt{KARnet}(2-layer)              & 92.39 & 88.99 & 97.25 \\
  \texttt{KARnet}(3-layer)              & 92.64 & 94.32 & 97.17 \\
  \texttt{KARnet}(4-layer)              & 92.73 & 94.12 & 96.96 \\
\hline
\end{tabular}\\*[2mm]
\end{center} }
\begin{tabular}{lll}
\hspace{3mm} & \texttt{FFnet} &: \texttt{Feedforwardnet} from Matlab \cite{Matlab}.\\
\hspace{3mm} & OM &: Out of memory for the current computing platform.
\end{tabular}
\end{table}

%\newpage

\section{Conclusion}

In this article, the solution based on the manipulation of the kernel and the range space
has been found to be equivalent to that obtained by the least squares error estimation.
By exploiting this observation, a learning approach based on the kernel and the range
space manipulation has been introduced. The approach solves the system of linear
equations directly by exploiting the row and the column spaces without the need for error
formulation and gradient descent. The adoption of the learning approach to deep networks
learning validated its feasibility. The learning results of synthetic and real-world data
provided not only the numerical evidence but also the insights regarding the fitting
mechanism. This opens up the vast possibilities along the research direction.

\section*{Acknowledgment}

This research was supported by Basic Science Research Program through the National
Research Foundation of Korea (NRF) funded by the Ministry of Education, Science and
Technology (Grant number: NRF-2015R1D1A1A09061316).\\

\newpage

\end{document}